\newtheorem{theorem}{Theorem}
\newtheorem{lemma}{Lemma}
\newtheorem{definition}{Definition}
\newtheorem{remark}{Remark}
\newcommand\correspondingauthor{\thanks{* Corresponding author: {\tt\small qingchen.liu@tum.de}.}}
\begin{document}

\title{\Large \bf Distributed Event- and Self-Triggered Coverage Control with Speed Constrained Unicycle Robots}

\author{Yuni Zhou, Lingxuan Kong, Stefan Sosnowski,  Qingchen Liu\correspondingauthor~and Sandra Hirche
\thanks{
Y. Zhou, S. Sosnowski, Q. Liu and S. Hirche are with the Chair of Information-Oriented Control, Technical University of Munich, Munich, Germany. {\tt\small \{ge69fos, sosnowski, qingchen.liu, hirche\}@tum.de}.\newline
\indent L. Kong is with the Department of Engineering, University of Cambridge, Cambridgeshire, U.K.  {\tt\small leo.kong6577@gmail.com}.\newline
\indent The work of Q. Liu was supported by the European Union’s Horizon 2020 research and innovation programme under the Marie Skłodowska-Curie Grant 754462. The work of S. Hirche was supported by the German Research Foundation (DFG) within the Joint Sino-German research project Control and Optimization for Event-triggered Networked Autonomous Multi-agent Systems (COVEMAS).
}
}

\maketitle

\begin{abstract}
Voronoi coverage control is a particular problem of importance in the area of multi-robot systems, which considers a network of multiple autonomous robots, tasked with optimally covering a large area. This is a common task for fleets of \textit{fixed-wing} Unmanned Aerial Vehicles (UAVs), which are described in this work by a unicycle model with constant forward-speed constraints. We develop event-based control/communication algorithms to relax the resource requirements on wireless communication and control actuators, an important feature for battery-driven or otherwise energy-constrained systems. To overcome the drawback that the event-triggered algorithm requires continuous measurement of system states, we propose a self-triggered algorithm to estimate the next triggering time. Hardware experiments illustrate the theoretical results. 
\end{abstract}


\section{Introduction}



Coverage control is one of the typical problems in the research field of multi-robot systems \cite{cortes2004coverage}. The coverage control problem considers the deployment of a network of mobile autonomous robots to optimally cover a specified area. Applications of coverage control involving multiple networked mobile robots (also referred to as a mobile sensor network) include search and rescue operations, surveillance, environmental monitoring, and exploration of hazardous/inaccessible regions. In many of these applications, an appropriate mobile robot of choice is a fixed-wing UAV. Fixed-wing UAVs can have great endurance in both flight time and range, able to stay in operation for many hours, and can be equipped with a range of different sensors.


The dynamic model of the robots must be considered when designing a distributed control algorithm to solve the coverage control problem. The pioneering work by Cortes et. al \cite{cortes2004coverage} considered single-integrator robot dynamics. In \cite{jiang2015higher}, multiple single-integrator robots covering the same partition of the region are considered. For robots with complex dynamics, e.g. fixed-wing UAVs, unicycle models are a more suitable choice to capture the robot dynamics \cite{seyboth2014collective, sun2015collective}. However, since unicycle models have complex nonholonomic dynamics, control algorithms developed for single-integrator robot models cannot be readily applied. Our recent work \cite{liu2017coverage} has studied the problem of optimal coverage control of unicycle robots and designed the corresponding controller.

Along with the idea in \cite{cortes2004coverage}, follow-up research works \cite{schwager2006distributed, gusrialdi2008voronoi, breitenmoser2010voronoi, gusrialdi2014exploiting, boldrer2019coverage} mainly focus on continuous-time controller design considering practical constraints from sensing devices or robot dynamics/kinematics. It is worth mentioning that the information required for control is the relative position between the robots, because of which wireless communication might be neglected.  However, a multi-robot system benefits from information sharing within a wireless network in which each robot acts as a network node \cite{mavrommati2017real}. For example, 1) the external localization devices can provide more accurate position information; 2) the environment disturbance to the relative information sensing can be avoided. However, a wireless network is usually resource-limited in the sense of communication bandwidth. Thus how to design resource-aware control algorithms with a limited number of information exchanges is crucial for applying distributed control algorithms in practice.  This motivated the development of distributed event-triggered control schemes, see, for example, \cite{dimarogonas2011distributed,wei2018edge,sun2016new,liu2017event}. In an event-triggered algorithm, the control and communication task is executed only when specific trigger conditions are met with respect to local system states. This approach significantly reduces the cost of communication resources since events are generated aperiodically and adaptively.


Event-based coverage control is still a relatively new research topic. In \cite{hayashi2015distributed}, a distributed event-triggered controller was designed for the Voronoi coverage problem. However, their considered robot dynamics are single-integrators and Zeno behavior cannot be excluded. In \cite{nowzari2012self}, a self-triggered mechanism was proposed to achieve both guaranteed and dual guaranteed Voronoi coverage control. The work in \cite{tabatabai2019self, ajina2020asynchronous} follows a similar streamline of \cite{nowzari2012self}. However, their proposed algorithm can be regarded as an extension of discrete-time Lloyd descent, which means, continuous-time robot dynamics/kinematics cannot be involved.

In this work, we present distributed control algorithms for optimal coverage problems with unicycle-type robots, whose forward speed is constrained as a constant. To help achieve the static coverage objective, and for explicit use in the control algorithm to be designed, we define a virtual rotation center for each robot. The event-based scheduler computes an input mismatch according to the robots' states and triggers the control actuation and communication as long as a defined norm of the input mismatch exceeds a state-dependent threshold function. By using Lyapunov analysis, we show that the virtual rotation center for each robot asymptotically converges to a centroidal Voronoi tessellation, thus achieving the optimal coverage objective. Our proposed algorithm can also guarantee Zeno-free triggering by showing a strictly positive lower bound of the inter-event time interval. Meanwhile, we also propose a self-triggered algorithm to overcome the drawback that the event-triggered algorithm requires continuous measurement of system states. The central idea is to estimate the next triggering time, by utilizing the certain bounds of the speed of the evolution states and the information of the threshold function. Hardware experiments have been conducted to illustrate the correctness of the proposed algorithms and demonstrate their performance.

The remainder of this paper is structured as follows: Section II introduces some preliminaries and provides a formal definition of the problem. Section III presents the event-triggered and self-triggered control algorithms with detailed theoretical analysis. The designed algorithms are verified by experiments in Section IV. Finally, we conclude this paper in Section V.

\section{PRELIMINARIES AND PROBLEM STATEMENT}\label{section:2}

\subsection{Notation}  

The set of complex numbers is denoted by $\mathbb{C} $. The imaginary unit is $ i := \sqrt{-1} $. A complex number $ z \in \mathbb{C} $ is denoted as $ \mathbf{Re}(z) + i\mathbf{Im}(z) $, where $\mathbf{Re}(z) $ and $ \mathbf{Im}(z) $ are its real and imaginary part, respectively. The complex conjugate of $z$ is denoted by $ \bar{z} $. For $z_{1}, z_{2} \in \mathbb{C} $, the scalar product is defined by $ \left \langle z_{1}, z_{2} \right \rangle = \mathbf{Re}( \bar{z_{1}}z_{2}) $. The norm of $ z \in \mathbb{C} $ is defined as $ \left \| z \right \| = \left \langle z, z \right \rangle^{\frac{1}{2}} $. 


\subsection{Locational Optimization} 
This section is based on the results of Cortes et. al \cite{cortes2004coverage}. Consider a group of $n$ mobile robots, whose dynamics we define in the sequel, tasked with covering a convex polygon $Q$ in $\mathbb{R}^2$, and let $q$ denote an arbitrary point in $Q$. A distribution density function is a map $\Phi:Q\rightarrow\mathbb{R}_{+}$ that represents a measure of information in $Q$. The location of the $k^{th}$ robot moving in the space $Q$ is denoted by $p_k, k \in \{1, \hdots, n\}$, and $P=(p_1,\ldots,p_n)$ captures the position of all $n$ robots. Given a distribution density function $\Phi(q)$ that is known to all robots, we define a coverage performance function as
\begin{align}\label{eq:performance_function_1}
H(P)=\int_{Q}\min\limits_{k\in\{1,\cdots,n\}}\|q-p_k\|^2\Phi(q)dq.
\end{align}
The quantity $\|q-p_k\|^2$ can be considered as a quantitative measure of how poorly a sensor positioned at $p_k$ can sense the event of interest occurring at point $q$.  

The minimizing operation inside the integral of the above performance function Eq.~\eqref{eq:performance_function_1} induces a Voronoi partition $V(P)=(V_1,\ldots,V_n)$ of the polygon $Q$. Formally, the $k^{th}$ Voronoi partition, $k \in \{1, \hdots, n\}$ is defined as follows: 
\begin{align}\label{eq:voronoi_partition}
V_k=\{q\in Q|~\|q-p_k\|\leq\|q-p_j\|,\forall j\neq k\}
\end{align}
The set of regions ${V_1,\ldots,V_n}$ is called the Voronoi diagram for the generators ${p_1,\ldots,p_n}$. Each Voronoi cell $V_k$ is convex. When two Voronoi regions $V_k$ and $V_j$ are adjacent (i.e., they share an edge), robot $k$, with position $p_k$, is called a neighbor of robot $j$, with position $p_j$, (and vice versa). The set of indexes of the robots which are Voronoi neighbors of $p_k$ is denoted by $N_k$. According to the definition of the Voronoi partition, we have $\min_{k\in\{1,\cdots,n\}}\|q-p_k\|^2 = \|q-p_j\|^2$ for any $q$ inside $V_j$. It follows that, $H(p)$ in \eqref{eq:performance_function_1} can be rewritten as
\begin{align}\label{eq:performance_function_2}
H(P)=\sum_{k=1}^{n}\int_{V_k}\|q-p_k\|^2\Phi(q)dq
\end{align}
The following lemma establishes an important fact regarding the performance function $\eqref{eq:performance_function_2}$.

\begin{lemma}[Lemma 2.1, \cite{cortes2004coverage}] \label{lem:gradient_function}
	The gradient of $H(P)$ is given by
	\begin{align}
	\frac{\partial H}{\partial p_k}=\int_{V_k}\frac{\partial}{\partial p_k}\|q-p_k\|^2\Phi(q)dq 
	\end{align}\hfill $\square$
\end{lemma}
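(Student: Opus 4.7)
The plan is to work from the original form \eqref{eq:performance_function_1}, in which the integration domain $Q$ is independent of $P$, rather than from the Voronoi-partition form \eqref{eq:performance_function_2} whose domains depend on $P$. Setting $g(q,P) := \min_{k \in \{1,\dots,n\}} \|q - p_k\|^2$, the function $g$ is continuous in $(q,P)$ as a pointwise minimum of continuous functions, and on the open set where the minimizer is unique it is smooth in $P$, with gradient equal to that of the single active distance-squared term. I would exploit this structure to differentiate under the integral.

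Concretely, the first step is to verify that $g$ and its parameter gradient are uniformly bounded on $Q \times U$ for any bounded neighborhood $U$ of $P$, which allows me to invoke the dominated convergence theorem and move $\partial/\partial p_k$ inside the integral:
\[
\frac{\partial H}{\partial p_k}(P) = \int_Q \frac{\partial g}{\partial p_k}(q,P)\,\Phi(q)\,dq.
\]
The second step is to evaluate the integrand pointwise. For $q$ in the interior of $V_k$ the minimum in $g$ is uniquely attained by the $k$-th term, so locally $g(q,P) = \|q-p_k\|^2$ and $\partial g/\partial p_k = \partial \|q-p_k\|^2/\partial p_k$. For $q$ in the interior of some other $V_j$ the minimum is uniquely attained by the $j$-th term, $g$ is independent of $p_k$, and $\partial g/\partial p_k = 0$. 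The set of $q$ lying on Voronoi cell boundaries is a finite union of line segments and therefore has two-dimensional Lebesgue measure zero, contributing nothing to the integral. Collecting these cases yields precisely the claimed identity.

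The main subtlety is the rigorous justification of differentiation under the integral near the Voronoi skeleton, where $g$ fails to be smooth in $P$. This is handled by observing that for any fixed $q$ off the skeleton, the nearest-neighbor index is constant under a sufficiently small perturbation of $P$, so the relevant difference quotient eventually equals either $\partial \|q-p_k\|^2/\partial p_k$ or $0$ while remaining uniformly bounded in a neighborhood of $P$; the skeleton itself can then be absorbed into the null set of the dominated-convergence argument. An alternative, more geometric route would start directly from \eqref{eq:performance_function_2} and apply a moving-boundary Leibniz rule to each summand: the key cancellation there is that on each shared edge $\partial V_k \cap \partial V_j$ the two boundary contributions carry integrands $\|q-p_k\|^2$ and $\|q-p_j\|^2$, which coincide by \eqref{eq:voronoi_partition}, multiplied by outward normal velocities of opposite sign, so they annihilate pairwise and only the interior term survives.
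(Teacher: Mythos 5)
Your argument is correct, but note that the paper itself offers no proof of this lemma --- it is imported verbatim as Lemma~2.1 of \cite{cortes2004coverage}, where the standard derivation is exactly the ``alternative, more geometric route'' you sketch in your last sentences: a moving-boundary Leibniz rule applied to each summand of \eqref{eq:performance_function_2}, with the boundary terms cancelling pairwise because $\|q-p_k\|^2=\|q-p_j\|^2$ on each shared face while the normal velocities of the face, seen from $V_k$ and from $V_j$, are opposite. Your primary route is genuinely different and arguably cleaner: by staying with the min-form \eqref{eq:performance_function_1} you keep the integration domain $Q$ fixed, so no boundary-variation machinery is needed at all; the pointwise minimum is Lipschitz in $P$ uniformly over the compact set $Q\times U$ (since $|\min_j a_j-\min_j b_j|\le\max_j|a_j-b_j|$ and each $\|q-p_j\|^2$ has bounded parameter-gradient there), the difference quotients are therefore dominated, and for every $q$ off the Voronoi skeleton --- a finite union of segments, hence a Lebesgue-null set --- the active index is locally constant under perturbations of $P$, so the quotients converge to $\partial\|q-p_k\|^2/\partial p_k$ on $\mathrm{int}\,V_k$ and to $0$ elsewhere. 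What the classical approach buys is generality (it extends to performance functions where the cell decomposition is not induced by the minimization, provided the integrand is continuous across cell boundaries); what yours buys is the avoidance of any regularity discussion of how the cells $V_k(P)$ deform with $P$. Two small points you should make explicit to be fully rigorous: the generators must be assumed pairwise distinct (otherwise the skeleton degenerates and the local-constancy of the active index can fail on a set of positive measure), and if you want the \emph{gradient} rather than merely the partial derivatives you should add one line noting that the resulting expression $2M_{V_k}(p_k-C_{V_k})$ depends continuously on $P$, so $H$ is $C^1$.
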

It follows immediately from Lemma \ref{lem:gradient_function} that
\begin{align*}
\frac{\partial H}{\partial p_k}=2p_k\int_{V_k}\Phi(q)dq-2\int_{V_k}q\Phi(q)dq.
\end{align*}
For all $k \in \{1, \hdots, n\}$, define the (generalized) mass $M_{V_k}$, and centroid (or centre of mass) $C_{V_k}$ of the Voronoi partition $V_k$ as:
\begin{align*}
M_{V_k}=\int_{V_k}\Phi(q)dq,~~~C_{V_k}=\frac{1}{M_{V_k}}\int_{V_k}q\Phi(q)dq.
\end{align*}
With these definitions, the gradient can now be expressed as
\begin{align}\label{eq:gradient_fucntion}
\frac{\partial H}{\partial p_k}=2M_{V_k}(p_k-C_{V_k})
\end{align}
Critical points of $H$ are those in which every robot is at the centroid of its Voronoi cell. A Voronoi configuration corresponding to critical points of $H$ is called a Centroidal Voronoi Tessellation. The objective of this paper is to design a distributed controller for each robot $k$. Then, collectively, the robots can move into positions that establish a Centroidal Voronoi Tessellation.

\subsection{Unicycle Model with Constant Speed}

We choose to model each robot as a unicycle. Unicycle dynamics are appropriate for capturing fixed-wing UAVs or ground-based vehicles. The unicycle model for the $k^{th}$ robot, $k \in \{1, \hdots, n\}$, is given as
\begin{align*}
\dot{x}_k(t)&=v_k\cos(\theta_k(t))\notag\\
\dot{y}_k(t)&=v_k\sin(\theta_k(t))\notag\\
\dot{\theta}_k(t)&=u_k(t)
\end{align*}
where $x_k(t)\in\mathbb{R}$, $y_k(t)\in\mathbb{R}$ are the coordinates of robot $k$ in the real plane and $\theta_k(t)$ is the heading angle (i.e. the forward facing direction of the robot) at time $t$. The forward velocity $v_k\in\mathbb{R}_+$ is by definition strictly positive. In this paper, we assume that it is \textit{fixed/constant/nonidentical}, and cannot be designed as a control input. In practice, the cruising speed may be the most fuel-efficient speed for the robot (which is desirable for long periods of surveillance) or optimal against some other performance measure. The control input $u_k(t)$ is to be designed for steering the orientation of robot $k$. When there is no risk of confusion, we drop the argument $t$.


For analysis purposes, we express the position of robot $k$, $(x_k, y_k)$, in the complex plane using the complex notation $r_k=x_k+iy_k := \|r_k\|e^{i\theta_k}\in\mathbb{C}$. Then the above unicycle dynamics can be reformulated as
\begin{align}\label{eq:complex_unicycle_model}
\dot{r}_k(t)&=v_ke^{i\theta_k(t)}\notag\\
\dot{\theta}_k(t)&=u_k(t)
\end{align}
In the complex plane, robot $k$'s instantaneous center of the circular orbit $c_k$ can be given by
\begin{equation}
c_k(t)  = r_k(t) + \frac{v_k}{ \dot{\theta}_k(t)} i e^{i\theta_k(t)}
\end{equation}

To help achieve the control objective, and for the explicit use in the control algorithm to be designed, we define for each robot $k$ a virtual center $z_k$, as
\begin{equation}\label{eq:virtual_centre}
z_k(t) = r_k(t)+\frac{v_k}{\omega_0}ie^{i\theta_k(t)}.
\end{equation}
Here, $\omega_0$ is a non-zero constant. Differentiating both sides of \eqref{eq:virtual_centre}, and substituting in \eqref{eq:complex_unicycle_model}, the dynamics of the virtual center can be written as:
\begin{align}\label{eq:virtual_centre_dynamic}
\dot{z}_k(t) &= v_ke^{i\theta_k(t)}-\frac{v_k}{\omega_0}e^{i\theta_k(t)}u_k(t),~~k=1,\cdots,n
\end{align}

\subsection{Problem Statement} 
Given a convex polygon $Q \in \mathbb{C}$, suppose there are $n$ unicycle robots with dynamics given by \eqref{eq:complex_unicycle_model}, for $k = 1, \hdots, n$. Let $v_k$ be the constant forward speed for robot $k$, with $v_k$ not necessarily equal to $v_j$ for $ k \neq j,~k,~j \in \{1, \hdots n\}$, and $z_k$ be the virtual center as defined in \eqref{eq:virtual_centre}. Let $Z$ be the stacked column vector of all virtual centers $z_k$, i.e. $Z = (z_1, \cdots, z_n)$. The objective is to design the control input $u_k$ to steer the virtual center $z_k$ of each unicycle robot to a desired Voronoi centroid $C_{V_k}$ so that the polygon $Q$ can be covered optimally in the sense of minimizing the performance function
\begin{align}\label{eq:performance_function_3}
H_{V}(Z)=\sum_{k=1}^{n}\int_{V_k}\|q-z_k\|^2\Phi(q)dq.
\end{align}
That is, we desire
\begin{equation}\label{eq:zk_objective}
\lim_{t\to\infty} z_k(t) = C_{V_k}(t)\,~~\forall\;k = 1, \hdots, n
\end{equation}

Once the virtual center has arrived at the centroid, i.e. \eqref{eq:zk_objective} has been achieved, each robot will orbit about the centroid with the fixed angular speed $\omega_0$. That is, $\lim_{t\to\infty} \dot{\theta}_k = \omega_0$ for all $k$, which when combined with the existing assumption that $v_k$ is constant, implies robot $k$ is executing a steady-state circular orbit. Since $v_k$ and $v_j$ are not necessarily equal for $k \neq j$ and $k,j \in \{1, \hdots, n\}$, it follows that the robots' steady-state orbit radii are not necessarily equal.

Liu et. al \cite{liu2017coverage} proposed a continuous controller:
\begin{equation} \label{eq:continuous control law}
u_{k}(t) = w_{0} + \gamma w_{0}\left \langle z_{k}(t)-C_{V_{k}}(t), v_{k}e^{i\theta _{k}(t)} \right \rangle
\end{equation}
where $\gamma > 0$ is a positive control gain. Building on this previous approach, we aim to design event-triggered and self-triggered control algorithms to achieve optimal coverage, which should satisfy the following requirements:
\begin{enumerate}
	\item The algorithm is distributed in the sense that only neighbor information is received for each robot to update the controller. 
	\item Zeno behavior should be completely excluded for each robot in implementing the control algorithm.
	\item The self-triggered algorithm does not require any continuous measurement of the states.
\end{enumerate}

The following lemma will later be used for stability analysis.
\begin{lemma} [Lemma 3, \cite{liu2020network}] \label{lem:stability analysis}
Consider a continuously differentiable function $ g : \mathbb{R}^{\geq 0} \rightarrow \mathbb{R}^{\geq 0} $. If there exist continuous
functions $ \gamma : \mathbb{R}^{\geq 0} \rightarrow \mathbb{R}^{+}  $  and $ \beta : \mathbb{R}^{\geq 0} \rightarrow \mathbb{R}^{+} $ satisfying $ \dot{g}(t) \leq -\gamma (t)g(t) + \beta (t) $, then 
\begin{equation}
	g(t) \leq e^{-\int_{0}^{t}\gamma (s)ds}g(0) + \int_{0}^{t}e^{-\int_{s}^{t}\gamma (r)dr}\beta (s)ds
\end{equation}
Furthermore, the following statements hold:
\begin{itemize}
	\item If $ \int_{0}^{\infty }\gamma (t)dt = \infty $ and $ \underset{t\rightarrow \infty }{lim}\frac{\beta (t)}{\gamma (t)}= 0 $, then
	$\underset{t\rightarrow \infty }{lim}g(t)= 0$.

	\item If $ \int_{0}^{\infty }\gamma (t)dt = \infty $ and $ \underset{t\rightarrow \infty }{lim}sup \frac{\beta (t)}{\gamma (t)}< \infty $, then $\left \{ g(t) \right \}_{t\geq 0} $ is bounded.
\end{itemize} 
\end{lemma}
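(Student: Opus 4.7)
The plan is to derive the integral inequality by the standard integrating factor technique, and then to obtain the two asymptotic consequences by splitting the resulting convolution integral at a suitably chosen time $ T $.

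First I would introduce the shorthand $ \Gamma(t) := \int_0^t \gamma(s)\,ds $ and multiply the hypothesis $ \dot{g}(t) + \gamma(t)g(t) \leq \beta(t) $ by the positive integrating factor $ e^{\Gamma(t)} $. Since $ \frac{d}{dt}\bigl( e^{\Gamma(t)} g(t) \bigr) = e^{\Gamma(t)}\bigl(\dot g(t) + \gamma(t) g(t)\bigr) $, the inequality becomes $ \frac{d}{dt}\bigl(e^{\Gamma(t)} g(t)\bigr) \leq e^{\Gamma(t)}\beta(t) $. Integrating from $ 0 $ to $ t $ and dividing through by $ e^{\Gamma(t)} $ yields exactly the claimed bound
\begin{equation*}
g(t) \leq e^{-\Gamma(t)} g(0) + \int_0^t e^{-(\Gamma(t) - \Gamma(s))}\beta(s)\,ds .
\end{equation*}

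For the first asymptotic statement, the hypothesis $ \Gamma(t) \to \infty $ drives the initial-condition term to zero. For the remaining convolution-type integral I would fix $ \varepsilon > 0 $ and use $ \beta/\gamma \to 0 $ to choose $ T $ with $ \beta(s) \leq \varepsilon \gamma(s) $ for all $ s \geq T $. Splitting the integral at $ T $, the piece over $ [0,T] $ is bounded by $ \bigl(\sup_{[0,T]}\beta\bigr) \cdot T \cdot e^{-(\Gamma(t) - \Gamma(T))} $, which vanishes as $ t \to \infty $, while the piece over $ [T,t] $ is bounded using $ \gamma = \Gamma' $ by
\begin{equation*}
\varepsilon \int_T^t e^{-(\Gamma(t) - \Gamma(s))}\gamma(s)\,ds = \varepsilon\bigl(1 - e^{-(\Gamma(t) - \Gamma(T))}\bigr) \leq \varepsilon .
\end{equation*}
Letting $ t \to \infty $ and then $ \varepsilon \to 0 $ gives $ \lim_{t\to\infty} g(t) = 0 $. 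The boundedness conclusion follows by exactly the same splitting, with $ \varepsilon $ replaced by any finite constant $ M > \limsup_{t\to\infty} \beta(t)/\gamma(t) $, so that the tail contributes at most $ M $ while the initial piece is uniformly bounded.

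The computation is routine throughout; the only subtlety I anticipate is recognizing in the tail estimate that $ \gamma(s)\,ds = d\Gamma(s) $, so that the integral telescopes to $ 1 - e^{-(\Gamma(t) - \Gamma(T))} $ independently of how wildly $ \gamma $ may vary with time. With that identity, the $ \varepsilon $-$ T $ splitting disposes of both statements uniformly.
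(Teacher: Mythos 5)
Your argument is correct and complete: the integrating-factor step gives the stated integral bound, and the $\varepsilon$--$T$ splitting (using that $\gamma(s)\,ds = d\Gamma(s)$ makes the tail integral telescope to at most $\varepsilon$, respectively $M$) correctly delivers both the convergence and the boundedness claims. Note, however, that the paper itself offers no proof of this lemma --- it is imported verbatim by citation as Lemma 3 of an external reference --- so there is no in-paper argument to compare against; your proposal is the standard comparison-lemma derivation and would serve as a self-contained justification.
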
 

%
%
\begin{remark}
Since fixed wing UAVs can be set to work in different heights, collision avoidance is not a strict requirement here and subject to future work.
\end{remark}

\section{Problem analysis and algorithm framework}\label{section:3}

\subsection{Event-Triggered Controller Design} \label{section event triggered problem statement}

Let the event time instants for robot $k$ be denoted as $t_{0}^{k}=0,t_{1}^{k},...,t_{l}^{k},...$, where $l\in \mathbb{Z}$ denoting the set of all nonnegative integers. In this paper, the execution rule is based on actuation errors rather than measurement errors. The control input mismatch for robot $k$ is defined as
\begin{equation} \label{eq: actuation mismatch}
e_{k}(t) = u_{k}(t_{l}^{k})  - u_{k}(t), ~t\in [t_{l}^{k}, t_{l+1}^{k}) 
\end{equation}
Every time an event is triggered, $e_{k}(t)$ is reset to be equal to zero. The event-based control input is described by 
{\small
\begin{equation} \label{eq:event_based control update}
u_{k}(t_{l}^{k}) =
 w_{0} + \gamma w_{0}\left \langle z_{k}(t_{l}^{k})-C_{V_{k}}(z_{k}(t_{l}^{k}), z_{j}(t_{l^{j}}^{j})), v_{k}e^{i\theta _{k}(t_{l}^{k})} \right \rangle
\end{equation}
}where $j \in \mathcal{N}_{k}, l^{j} = argmin_{a\in \mathbb{N}}: t\geq t_{a}^{j}$. For $t\in [t_{l}^{k}, t_{l+1}^{k})$, $t_{l^{j}}^{j}$ is the last event time of robot $j$. Each robot takes into account the last update value of each of its neighbors in its control law. According to the definition of the actuation error \eqref{eq: actuation mismatch} we obtain
\begin{equation} \label{eq:0 t and error ersatz diacrete control update}
u_{k}(t_{l}^{k}) = u_{k}(t) + e_{k}(t)
\end{equation}

We define an auxiliary variable 
\begin{equation} \label{eq:Hilfe Variable fuer control input}
g_{k}(z_{k}(t)) = \left \langle z_{k}(t)-C_{V_{k}}(t), v_{k}e^{i\theta _{k}(t)} \right \rangle
\end{equation}

By substituting \eqref{eq:0 t and error ersatz diacrete control update} into the dynamic equation of each robot's virtual center \eqref{eq:virtual_centre_dynamic}, we obtain 
\begin{equation}\label{eq:event_based virtual_centre_dynamic}
\dot{z}_k(t)
 =   - v_k e^{i\theta_k(t)}\left ( \gamma g_k(z_{k}(t)) + \frac{e_{k}(t)}{\omega_0} \right )
\end{equation}

We consider the following trigger function for each robot:
\begin{equation} \label{eq: trigger function}
f_{k}= \underbrace{\left | e_{k}(t) \right |}_{\text{error}} - \underbrace{\sigma \gamma \omega_0 \left | g_{k}(z_{k}(t)) \right | - \mu_{k}(t)}_{\text{comparison threshold}},
\end{equation}
where $\gamma > 0$, $\omega_0 > 0$, $ 0 <\sigma <1 $, $\mu_{k}(t) = \gamma \omega_0 e^{-\alpha _{k}t}$ with $ 0< \alpha_{k} < 1 $. The error term is reset to zero whenever $f_{k} = 0$, i.e., the event is triggered. We now present the main result of the event-triggered control algorithm.

\begin{theorem} \label{Theorem: event based triggering}
	Consider a group of n unicycle-type robots with constant, non-identical speeds modeled by \eqref{eq:complex_unicycle_model} and driven by controller \eqref{eq:event_based control update}. The controller and the trigger function only require neighboring information to update. If each robot updates its input when the designed state-dependent trigger function $f_{k} = 0$, then their virtual centers converge asymptotically to the set of centroidal Voronoi tessellation (local minimum equilibrium for the coverage performance function) on $Q$ and no robot exhibits Zeno behavior. 
\end{theorem}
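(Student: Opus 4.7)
The plan is to treat the convergence and the Zeno-freeness claims separately, with the locational-optimization cost $V(t) = H_V(Z(t))$ from \eqref{eq:performance_function_3} serving as the Lyapunov candidate. By Lemma~\ref{lem:gradient_function} applied to $Z$, the gradient is $\partial V/\partial z_k = 2 M_{V_k}(z_k - C_{V_k})$, so the boundary contributions from the moving Voronoi cells already drop out. Differentiating $V$ along \eqref{eq:event_based virtual_centre_dynamic} and using the fact that $v_k$, $\omega_0$, $g_k$, and $e_k$ are real while $e^{i\theta_k}$ is complex, the identity $g_k = v_k\langle z_k - C_{V_k}, e^{i\theta_k}\rangle$ collapses the scalar product and yields
\[\dot V = -2\sum_{k=1}^{n} M_{V_k}\Bigl(\gamma g_k^2 + \tfrac{e_k g_k}{\omega_0}\Bigr).\]
I would then apply the trigger-enforced bound $|e_k|\le \sigma\gamma\omega_0|g_k| + \mu_k(t)$ from \eqref{eq: trigger function} together with Young's inequality on the $|g_k|\mu_k$ cross term to reach an estimate of the form $\dot V(t) \le -(1-\sigma)\gamma \sum_k M_{V_k} g_k^2(t) + \beta(t)$, where $\beta(t)$ is a finite linear combination of $e^{-2\alpha_k t}$ terms.

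Since $\beta\in L^1(\mathbb{R}_{\ge 0})$ and $V\ge 0$, integrating this inequality yields $\int_0^\infty \sum_k M_{V_k} g_k^2\,dt < \infty$, so either directly via Barbalat's lemma or by rewriting the bound in the decay-plus-disturbance form required by Lemma~\ref{lem:stability analysis}, one concludes $g_k(t)\to 0$ for every $k$. The trigger bound then forces $e_k\to 0$ as well, so $u_k\to\omega_0\neq 0$ and the heading $\theta_k$ continues to rotate; the identity $\langle z_k(t)-C_{V_k}(t), e^{i\theta_k(t)}\rangle \to 0$ holding for all large $t$ with $e^{i\theta_k(t)}$ sweeping all directions then forces $z_k - C_{V_k}\to 0$, establishing \eqref{eq:zk_objective}. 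I expect this last link to be the main obstacle: one has to rule out the degenerate scenario where $e^{i\theta_k}$ locks to a direction orthogonal to some nonzero residual, and the cleanest route is a LaSalle-type invariance argument on the $\omega$-limit set, using the fact that $g_k\equiv 0$ on a limit trajectory forces $z_k$ to be constant there.

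Finally, for Zeno-freeness I would derive a uniform bound $|\dot u_k(t)|\le M_k$ on each inter-event interval, using boundedness of $V$ to bound $z_k$, compactness of $Q$ to bound $C_{V_k}$, and the constant forward speed to bound $\dot\theta_k$. Since $e_k(t_l^k)=0$ and $|\dot e_k|=|\dot u_k|\le M_k$, while the triggering threshold is at least $\mu_k(t)=\gamma\omega_0 e^{-\alpha_k t}>0$, triggering forces $M_k(t_{l+1}^k - t_l^k) \ge \gamma\omega_0 e^{-\alpha_k t_{l+1}^k}$. If the events accumulated at some finite time $T<\infty$, the right-hand side would stay bounded below by $\gamma\omega_0 e^{-\alpha_k T}>0$, contradicting $t_{l+1}^k - t_l^k\to 0$; hence every inter-event interval is strictly positive and Zeno behavior is excluded.
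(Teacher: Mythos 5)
Your proposal is correct in its overall architecture and matches the paper's skeleton: the same Lyapunov candidate $H_V(Z)$, the same derivative $\dot H_V=-2\sum_k M_{V_k}\bigl(\gamma g_k^2+\tfrac{e_kg_k}{\omega_0}\bigr)$ obtained via Lemma~\ref{lem:gradient_function}, the same use of the trigger-enforced bound $|e_k|\le\sigma\gamma\omega_0|g_k|+\mu_k$, and essentially the same Zeno argument (bound $|\dot u_k|$ by a constant, note the threshold is at least $\mu_k>0$, and rule out a finite accumulation time). Where you genuinely diverge is in how the convergence $z_k\to C_{V_k}$ is extracted from the dissipation inequality. You first establish $g_k\to 0$ (via Young's inequality, integrability of $\beta\in L^1$, and Barbalat), and then must separately argue that a vanishing projection $\langle z_k-C_{V_k},e^{i\theta_k}\rangle\to 0$ with a persistently rotating heading forces $z_k-C_{V_k}\to 0$; you correctly identify this persistence-of-excitation / LaSalle step as the crux. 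The paper instead applies the comparison result of Lemma~\ref{lem:stability analysis} directly to $h_k=M_{V_k}\|z_k-C_{V_k}\|^2$ with the time-varying rate $\kappa_k(t)=2(1-\sigma)\gamma\|v_ke^{i\theta_k}\|^2\cos^2\psi$, asserting $\int_0^\infty\kappa_k\,dt=\infty$ and $\beta_k/\kappa_k\to 0$, and disposes of the orthogonal case $\cos\psi=0$, $z_k\ne C_{V_k}$ by arguing it is not an invariant set (since $u_k=\omega_0$ there makes $z_k-C_{V_k}$ constant while $e^{i\theta_k}$ rotates). Both routes hinge on exactly the same degenerate-direction issue and neither closes it with full rigor; your formulation makes the gap explicit, while the paper's case split plus the unverified divergence of $\int\kappa_k$ hides it. Your Zeno argument is, if anything, slightly more careful than the paper's two-case comparison, since it only claims absence of finite accumulation points rather than a uniform positive lower bound on all inter-event intervals.
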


\begin{proof}
See Appendix.
\end{proof}

\subsection{Self-Triggered Controller Design}
The intuitive idea for a self-triggered algorithm is to estimate the next updating time instant, by using the bounds of the actuation error $|e_{k}(t)|$ and the bounds of the comparison threshold. We first illustrate how to compute the bound of $|e_{k}(t)|$. Since the actuation error $ \left | e_{k}(t) \right | = \left | u_{k}(t_{l}^{k})  - u_{k}(t) \right |$, 
the triggering rule can be rewritten as
\begin{equation}
\left | u_{k}(t_{l}^{k})  - u_{k}(t) \right | \leq \sigma \gamma \omega_0 \left | g_{k}(z_{k}(t)) \right | + \mu_{k}(t)
\end{equation}

Note that between the event-triggered control updates, the control input is held constant via the Zero-order hold technique. This observation motivates us to provide an estimation for $u_{k}(t)$, if $ t \in [  t_{l}^{k}, min\left \{  t_{l+1}^{k}, \underset{j \in N_{k}}{min}~t_{l^{''}}^{j}\right \})$, using:
\begin{equation}
u_k(t) = \dot{u}_{k}(t_{l}^{k}) (t - t_{l}^{k}) + u_{k}(t_{l}^{k}) 
\end{equation}
and the time derivative of $u_k(t)$ is expressed as follows: 
\begin{equation} \label{time derivative of the controller}
\begin{split}
\dot{u}_{k}(t_{l}^{k})  &=  \gamma w_{0} \left \langle \dot{z_{k}}(t_{l}^{k})-\dot{C}_{V_{k}}(t_{l}^{k}), v_{k}e^{i\theta _{k}(t_{l}^{k})} \right \rangle \notag \\
& ~~~ + \gamma w_{0}\left \langle z_{k}(t_{l}^{k})-C_{V_{k}}(t_{l}^{k}), iv_{k}u_{k}e^{i\theta _{k}(t_{l}^{k})} \right \rangle
\end{split}
\end{equation}
where $ l^{''} \triangleq arg \underset{m \in \mathbb{N}: t_{l}^{k}\leq t_{m}^{j}}{min}\left \{ t_{m}^{j} - t_{l}^{k}  \right \} $; $t_{m}^{j}$ represents the update time instants of all the robot $k$'s neighbors after time instant $t_{l}^{k}$. $ t_{l^{''}}^{j} $ represents the latest update time instant for any robot $k$'s neighbors after time instant $t_{l}^{k}$; $\underset{j \in N_{k}}{min}~t_{l^{''}}^{j}$ represent the latest update time instant among the robot $k$'s neighbors, which is the nearest to the time instant $t_{l}^{k}$. 
Hence $min\left \{  t_{l+1}^{k}, \underset{j \in N_{k}}{min}~t_{l^{''}}^{j}\right \}$ is the next time when the control input is updated. Thus, the triggering rule is equivalent to 
\begin{equation}
\begin{split}
\left | u_{k}(t_{l}^{k})  - u_{k}(t) \right | &= \left | \dot{u}_{k}(t_{l}^{k}) (t - t_{l}^{k}) \right | \notag \\
& \leq  \sigma \gamma \omega_0 \left | g_{k}(z_{k}(t)) \right | + \mu_{k}(t)
\end{split}
\end{equation}
With the auxiliary variable defined in \eqref{eq:Hilfe Variable fuer control input}, we have 
{\small
\begin{equation*}
\left | \dot{u}_{k}(t_{l}^{k}) (t - t_{l}^{k}) \right | \leq \sigma \left |  \dot{u}_{k}(t_{l}^{k}) (t - t_{l}^{k}) + u_{k}(t_{l}^{k}) - \omega_0 \right | + \mu _{k}(t_{l}^{k})
\end{equation*}
}
Let $\xi_{l}^{k} = \xi (t) = t - t_{l}^{k} $, $ t \geq t_{l}^{k} $. Since $\xi_{l}^{k} \geq 0 $, we have the triggering condition
\begin{equation}
\xi_{l}^{k}(1- \sigma )\left | \dot{u}_{k}(t_{l}^{k})  \right | \leq \sigma \left |  u_{k}(t_{l}^{k}) - \omega_0 \right |  + \mu _{k}(t_{l}^{k})
\end{equation}
For $ \left | \dot{u}_{k}(t_{l}^{k})  \right | \neq 0 $, we have the positive inter-execution time interval
\begin{equation} \label{inter-event time interval}
\xi_{l}^{k} = \frac{\sigma \left |  u_{k}(t_{l}^{k}) - \omega_0 \right |  + \mu _{k}(t_{l}^{k})}{(1- \sigma )\left | \dot{u}_{k}(t_{l}^{k})  \right |}
\end{equation}
where $ 0 <\sigma <1 $. In the determination of the inter-execution time, no continuous measurement of the neighbor's states, and no continuous computation of the control input is needed.
The explanation of the self-triggered rule for each robot can be summarized as: if no state of any neighbors is received in the time interval $(t_{l}^{k}, t_{l}^{k} + \xi_{l}^{k} ) $ and there is a positive inter-execution time interval $\xi_{l}^{k} \geq 0$ such that the triggering condition $\xi_{l}^{k}\left | \dot{u}_{k}(t_{l}^{k}) \right | =  \left | \sigma \dot{u}_{k}(t_{l}^{k}) \xi_{l}^{k}+ \sigma u_{k}(t_{l}^{k}) - \sigma \omega_0 \right | + \mu _{k}(t)$ is satisfied, then the next update time $t_{l+1}^{k}$ takes place at most $ \xi_{l}^{k}$ time units after $t_{l}^{k}$. In other words, $t_{l+1}^{k} \leq t_{l}^{k} + \xi_{l}^{k} $. Otherwise, if the robot $k$'s control input is updated due to an update of one of its neighbors for $(t_{l}^{k}, t_{l}^{k} + \xi_{l}^{k} ) $, then the triggering condition needs to be re-checked and the next triggering time is new determined. 

We also provide a pseudo-algorithm for illustrative purpose, see Algorithm \ref{alg:a1}. Here, we let $\tau_{p}$ represent the temporary selection for the next triggering time instant $ t_{l+1}^{k} $. At time instant $t_{l+1}^{k}$, robot $k$ will measure all the neighbors' states, send its own states to all the neighbors and its control input will be updated. 
\begin{algorithm}\label{alg:a1}
\SetAlgoLined
	\KwData{the latest triggering time $ t_{l}^{k} $, the latest triggering time among the neighbors after $ t_{l}^{k}$ : $\underset{j \in N_{k}}{min}~t_{l^{''}}^{j}$ }
	\KwResult{determination of the next triggering time $ t_{l+1}^{k} $ }
    \textbf{Initialization}: $ t_0 \leftarrow t_{l}^{k}$; $u_{k}(t_0)$; $ \xi_0 \leftarrow \xi_{l}^{k}$; $\dot{u}_{k}(t_0)$  \;
	\While{$\dot{u}_{k}(t_0)  \neq 0$}{
		$\tau_{p}\leftarrow t_p + \xi_{p}$\;
		\eIf{ any neighbor of robot $k$ updates its control law in $ t \in ( t_p, \tau_{p} )$ }{
			 $p \leftarrow p + 1$, $t_p \leftarrow \underset{j \in N_{k}}{min}~t_{l^{''}}^{j}  $ \;
			Update $ u_{k}(t_p)$, $\dot{u}_{k}(t_p)$, $\xi_{p}$ \;
			Continue\;
		}{
			 $ t_{l+1}^{k} \leftarrow \tau_{p}  $\;
			 Break\;
		}
	}

	\Return{$ t_{l+1}^{k}$} 
	\caption{Determination of the next triggering time $ t_{l+1}^{k} $ }
\end{algorithm}
Applying the designed self-triggering rule, we have the following theorem:

\begin{theorem} \label{Theorem: self triggered}
	Consider a group of n unicycle-type robots with constant, non-identical speeds modeled by \eqref{eq:complex_unicycle_model} and driven by the controller \eqref{eq:event_based control update}. If each robot decides when to update its control input according to Algorithm \ref{alg:a1}, then their virtual centers converge asymptotically to the set of centroidal Voronoi tessellation on $Q$ and no robot exhibits Zeno behavior. 
\end{theorem}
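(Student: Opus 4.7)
The plan is to reduce Theorem~\ref{Theorem: self triggered} to Theorem~\ref{Theorem: event based triggering} by arguing that the self-triggered rule in Algorithm~\ref{alg:a1} is a conservative surrogate for the event-trigger condition $f_k \le 0$. Concretely, if the inter-event time $\xi_l^k$ defined in \eqref{inter-event time interval} enforces $|e_k(t)| \le \sigma\gamma\omega_0|g_k(z_k(t))| + \mu_k(t)$ for every $t\in[t_l^k,t_{l+1}^k)$, then the Lyapunov analysis from the proof of Theorem~\ref{Theorem: event based triggering} applies verbatim, yielding asymptotic convergence of the virtual centers to a centroidal Voronoi tessellation. Zeno exclusion will then follow from a strictly positive lower bound on $\xi_l^k$.

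First I would verify that at the instant $t_l^k$ the identity $u_k(t_l^k) - \omega_0 = \gamma\omega_0\, g_k(z_k(t_l^k))$ follows directly from the control law \eqref{eq:event_based control update} and the definition \eqref{eq:Hilfe Variable fuer control input}, so that the right-hand side $\sigma|u_k(t_l^k)-\omega_0| + \mu_k(t_l^k)$ used in $\xi_l^k$ coincides with the event-trigger threshold at $t = t_l^k$. For $t\in(t_l^k, t_l^k+\xi_l^k)$ the discretized controller is held constant by ZOH, so the actuation mismatch $|e_k(t)|$ grows at the instantaneous rate $|\dot u_k(t_l^k)|$ to first order, which is exactly the quantity appearing in the denominator of \eqref{inter-event time interval}. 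The higher-order remainder in the affine approximation $u_k(t)\approx u_k(t_l^k) + \dot u_k(t_l^k)(t-t_l^k)$ must then be shown to be dominated by the strictly positive, exponentially decaying term $\mu_k(t)$ over the prediction window.

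Second I would handle the interruption branch of Algorithm~\ref{alg:a1}: whenever a neighbor updates inside $(t_l^k, t_l^k+\xi_l^k)$, the centroid $C_{V_k}$ jumps and $\xi$ is recomputed from the new $u_k(t_p), \dot u_k(t_p)$. An induction on the number of such interruptions shows that $f_k \le 0$ is preserved across each recomputation, because each reset replaces the current estimate by the sharp value of the threshold at the new reference time. For Zeno exclusion, since $\mu_k(t) > 0$ for every finite $t$ and $|\dot u_k(t_l^k)|$ is uniformly bounded on trajectories confined to the compact polygon $Q$ (a bound transferred from the analysis of Theorem~\ref{Theorem: event based triggering}), the ratio in \eqref{inter-event time interval} admits a strictly positive lower bound on every finite horizon, precluding Zeno behavior.

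The main obstacle I anticipate is the rigorous control of the linearization error of $u_k$ over $[t_l^k, t_l^k+\xi_l^k)$: the continuous-time controller is not affine in $t$, and its second derivative involves $\dot z_k, \dot\theta_k$ and $\dot C_{V_k}$, the last of which depends on the neighbors through the Voronoi partition. Bounding this remainder tightly enough for it to be absorbed by $\mu_k(t)$, while still retaining a meaningful positive $\xi_l^k$, is the technical crux. Once this is in hand, both claims of Theorem~\ref{Theorem: self triggered} reduce to the corresponding statements of Theorem~\ref{Theorem: event based triggering} with essentially no additional Lyapunov work.
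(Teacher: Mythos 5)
Your proposal follows essentially the same route as the paper's own (very terse) proof: reduce the stability claim to Theorem~\ref{Theorem: event based triggering} by showing the self-triggered schedule keeps $f_k\le 0$, and exclude Zeno behavior via the explicitly positive inter-execution time $\xi_l^k$ in \eqref{inter-event time interval}. In fact you are more careful than the paper, which never addresses the linearization error of $u_k(t)$ over the prediction window that you correctly flag as the technical crux.
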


\begin{proof}
\textit{ 1) Stability analysis:} This part is similar to the proof of the event-based triggering theorem. The virtual centers of all unicycles will asymptotically converge to the set of centroidal Voronoi tessellations on $Q$. 

\textit{ 2) Absence of Zeno behavior:} When $\dot{u}_{k}(t_{l}^{k}) \neq 0$, there always exists a positive inter-execution time $\xi _{k} $, then we can conclude that there is no Zeno behavior in the system. When $ \dot{u}_{k}(t_{l}^{k}) = 0 $, it indicates that the control input has reached the desired value $ \omega_0$ and the centroidal Voronoi tessellation is achieved.
\end{proof}

\section{Experiments}\label{section:5}
In this section, the results of hardware experiments are provided to illustrate the performance of the proposed event-triggered and self-triggered algorithms. A video is available in our supplementary materials.

\subsection{Experimental setup}
\begin{figure}[t]
\centering
\includegraphics[width=8cm]{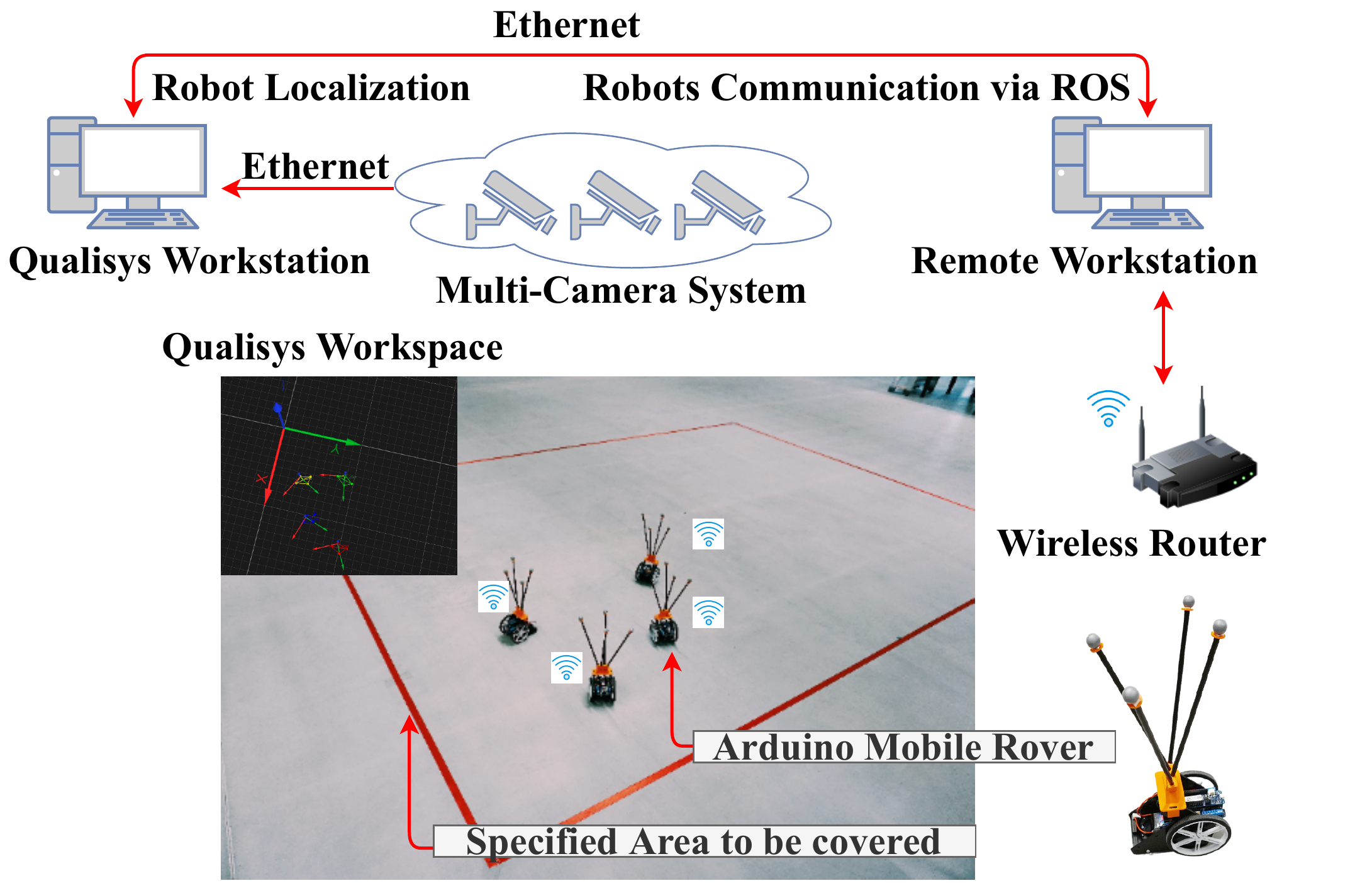}
\caption{Experimental setup}\label{fig.structure}
\end{figure}

The structure of our experimental setup is illustrated in Fig. \ref{fig.structure}. The system comprises a Qualisys multi-camera system, a remote workstation and four Arduino mobile rovers. The Qualisys multi-camera system provides high precision position and orientation information for the ground robots with up to 300Hz refresh rate. The remote workstation is a Lenovo ThinkPad laptop (Intel i5-6200U CPU and 8GB RAM), running Ubuntu 16.04 and robot operating system (ROS) kinetic. The mobile robots used in our experiment are Arduino mobile rover with Wi-Fi access, which is a differentially driven robot\footnote{The complete description of the Arduino mobile rover is referred to Arduino Engineering Kit.}. The remote workstation and the Arduino mobile rovers are communicating in a Wi-Fi network, with an ASUS RT-AC1750 router. 

Our control framework is two-layer. The lower layer runs on the Arduino mobile rover, in which the program interprets the input commands (forward speed and angular speed) to the respective rotational speeds of the two wheels and uses PID controllers to track them. The upper layer runs on the remote workstation, which consists of four ROS nodes. Each ROS node communicates with its neighbors to locally compute the Voronoi partitions (according to \cite{hadjicostis2003distributed}) and the event-triggered and self-triggered control input \eqref{eq:event_based control update}. Then the control commands are transmitted to the Arduino mobile rovers via UDP protocol. 

We provide two separate hardware experiments to demonstrate the performances of both event-triggered and self-triggered algorithms applied in a four-unicycle group. In both experiments, the coverage area $Q$ is rectangular in the size of $4.0\text{m}\times 2.8\text{m}$. The density function $\Phi(q)$ is assumed to be $1$ in the experiments. The angular frequency $\omega_0$ defined in \eqref{eq:virtual_centre} is selected as $0.536\text{rad/s}$ for all robots. The robots' forward speeds are set as $v_k=0.16\text{m/s}$ for all $k$. The selected values of the forward speed and angular speed meet the hardware constraints of the Arduino mobile rover. The initial positions and initial orientations of the robots are manually chosen such that the initial virtual centers locate inside the coverage region. 

\subsection{Experimental results}

\begin{figure}[t]
\centering
\includegraphics[width=8cm]{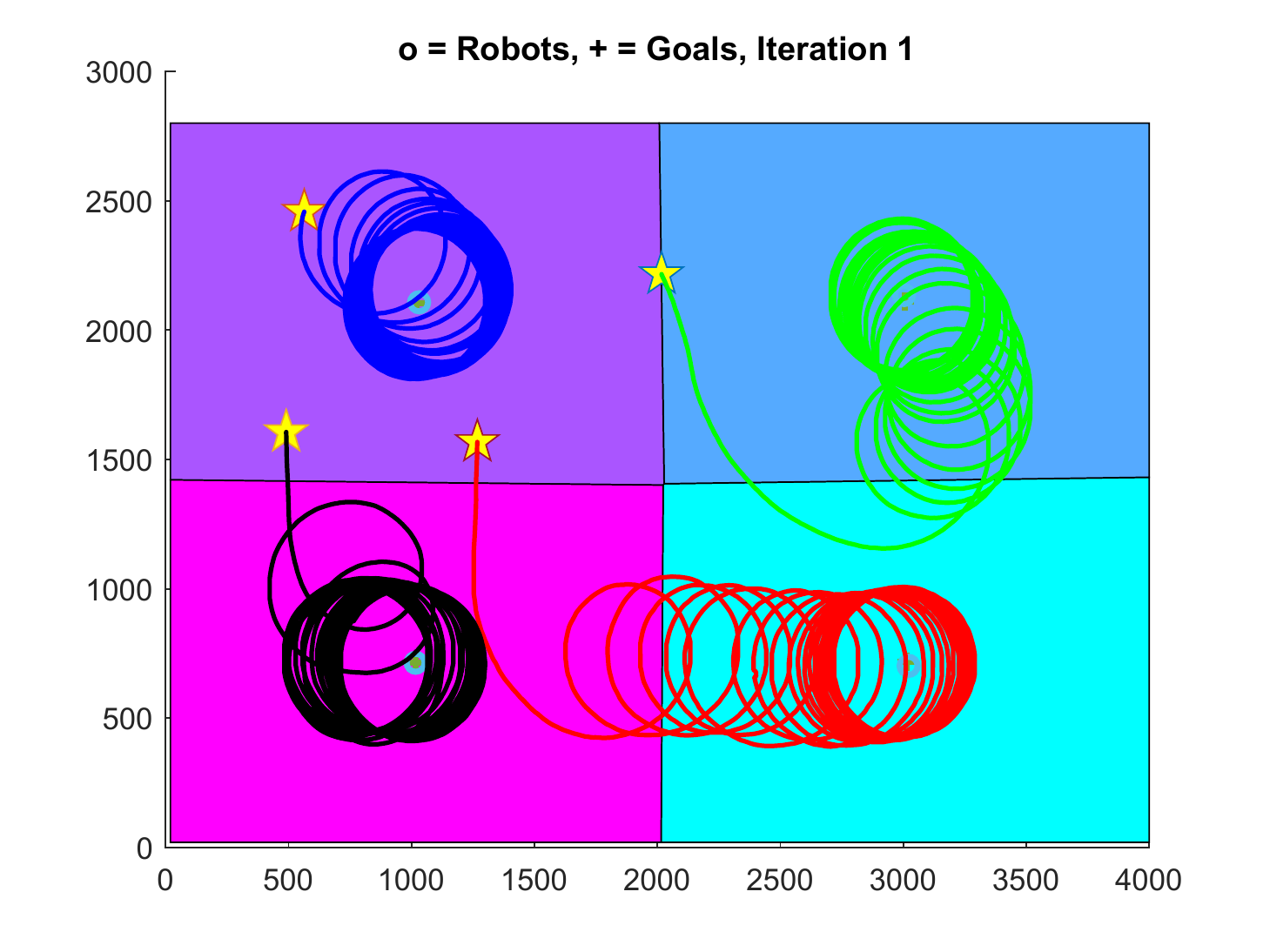}
\caption{ Coverage performance using event-triggered controller. The coverage area is in the size of $4.0m\times 2.8m$. The yellow stars denote the initial positions of the robots. The solid lines with different colors represent the trajectories of each robot. All robots move in a circular orbit concerning their virtual center. }
\label{fig:f_event}
\end{figure}

\begin{figure}[t]
\centering
\includegraphics[width=7.5cm]{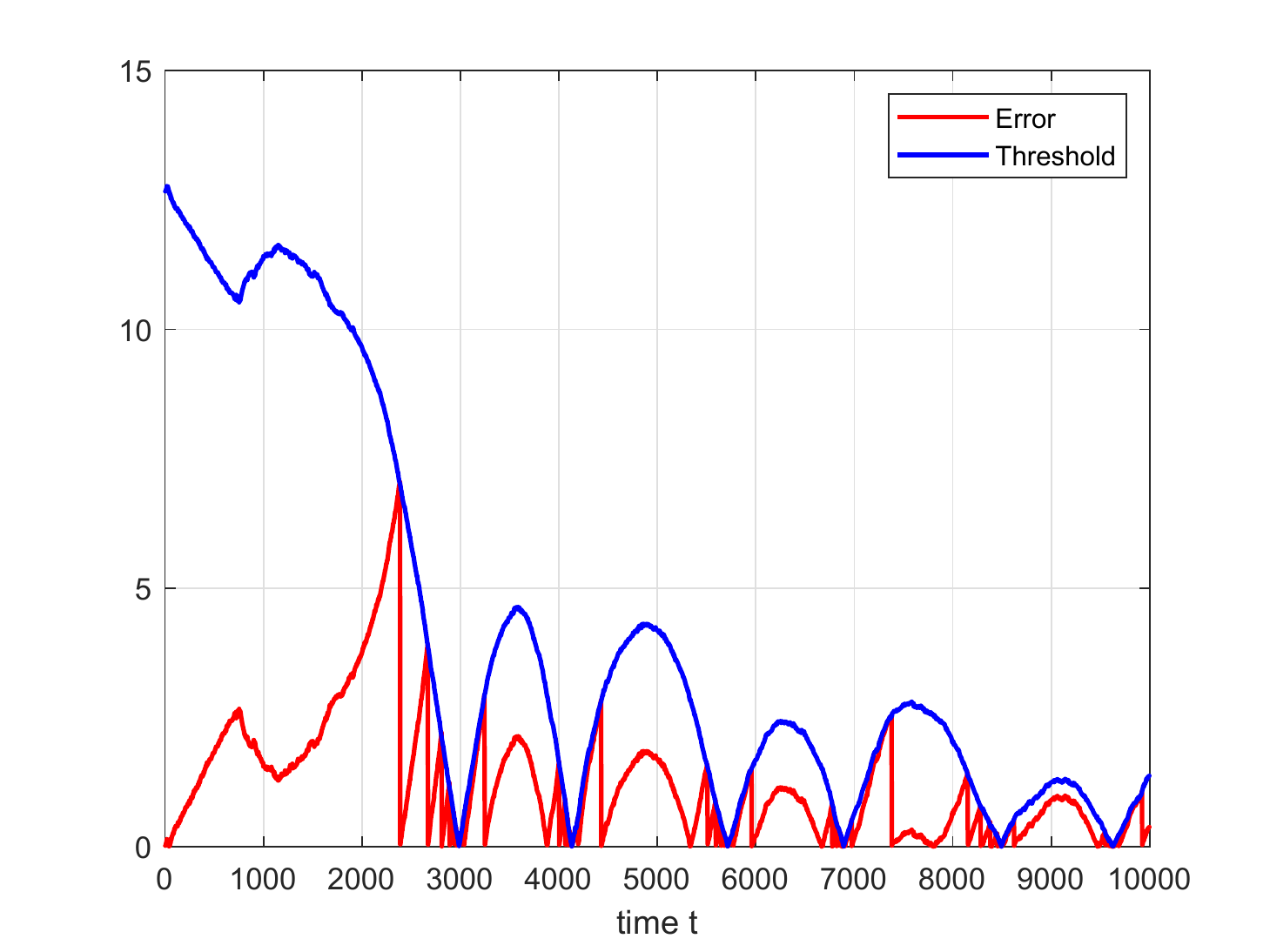}
\caption{Triggering behavior during the control process (time: ms). Error (red) vs. comparison threshold (blue). }\label{fig:f_error_threshold}
\end{figure}

\begin{figure}[t]
\centering
\includegraphics[width=7.5cm]{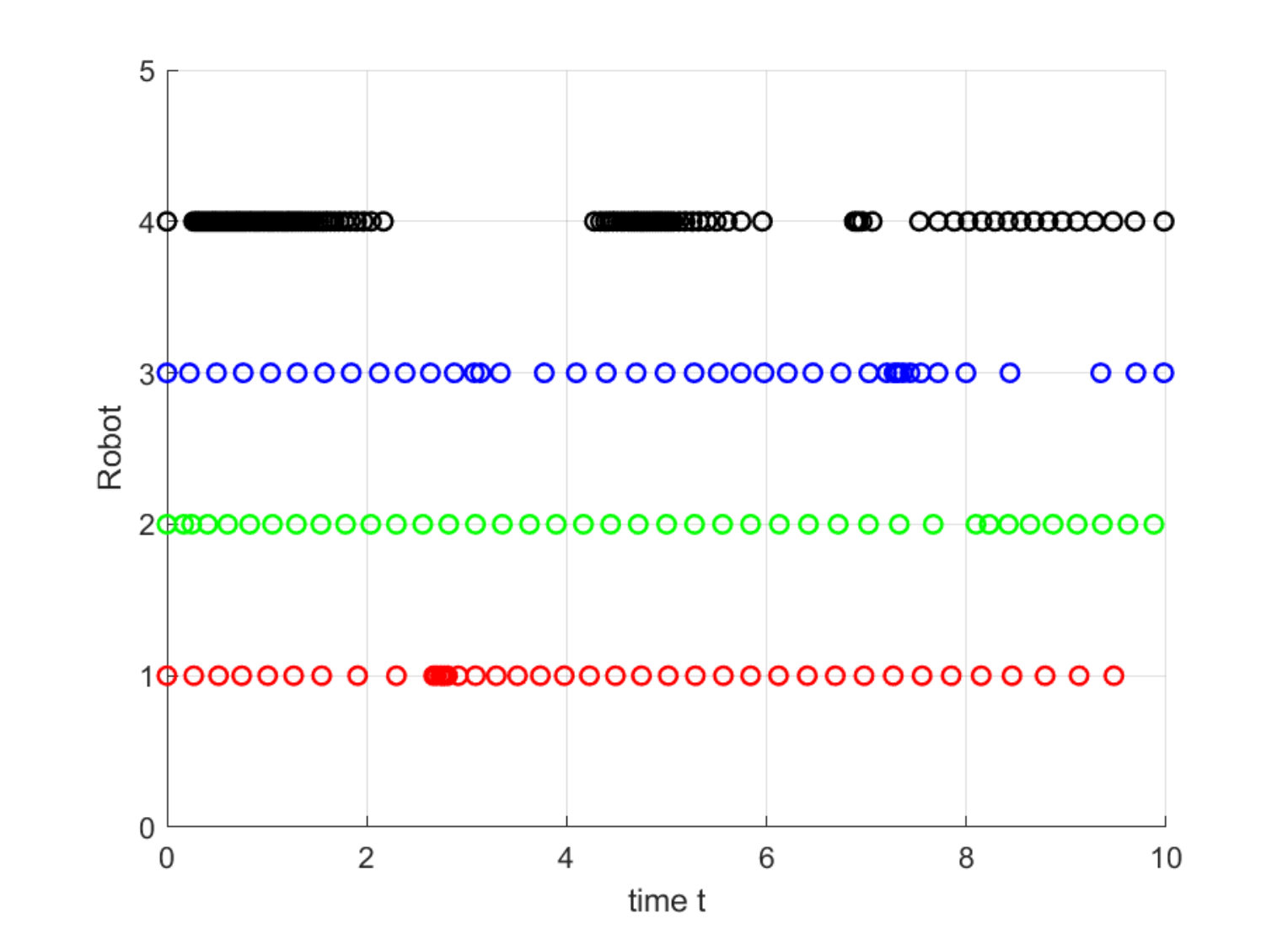}
\caption{Triggering time instants for the self-triggered algorithm. The red, green, blue and black dots denote the triggering times for robot 1, robot 2, robot 3 and robot 4, respectively. For illustration purpose, we select the time period between 0s to 10s with the triggering time instants for the four robots.}\label{fig:f_self_trigger}
\end{figure}

The experimental results of the event-triggered controller is shown in Fig. \ref{fig:f_event}, from which one can see that the optimal coverage objective has been successfully achieved. All virtual centers move asymptotically to the set of Centroidal Voronoi Tessellation. All robots move in a circular orbit concerning their virtual center. The trajectories of different robots are represented by solid lines with different colors. We note that the depicted trajectories are the raw data from Qualisys. The evolution of the event-triggered error and the comparison threshold in \eqref{eq: trigger function} is shown in Fig. \ref{fig:f_error_threshold}. When the value of the error reaches the threshold, the error is reset to zero and an event is triggered. The control input is updated according to Theorem \ref{Theorem: event based triggering}. The red line represents the evolution of the error. It stays below the designed threshold which is shown by the blue line in Fig. \ref{fig:f_error_threshold}.  

The second experiment illustrates the performance of the self-triggered algorithm. Due to the page limit, we omit the figure about the trajectories of the robots, since it is quite similar to the event-triggered case. The self-triggered control law can guarantee that the inter-event time intervals are lower bounded by a strictly positive constant. Thus, the Zeno behavior is excluded, which can be observed from Fig. \ref{fig:f_self_trigger}.  

From both experiments, it is observed that the optimal coverage objective can be achieved. The asynchronous feature of both the event-triggered and self-triggered algorithms are verified.

\section{CONCLUSION}\label{section:6}

In this work, we develop event-/self-triggered algorithms for coverage control problems with a group of unicycle-type robots to facilitate the efficient usage of the network resource. We firstly propose an event-based control algorithm, which generates events and control updates when the actuation error diverges to a designed threshold. Zeno behavior can be completely excluded by showing that strictly positive inter-event time intervals exist. Then we propose a self-triggered control algorithm to relax the strong assumption that continuous measurement is required for each robot, in which the core idea is to provide an estimation for the next update time instant based on the state and control value at the latest update time. Hardware experiments have been conducted to verify the correctness and performance of our proposed algorithms.

\section*{APPENDIX}

\begin{proof}
The proof is divided into two parts. In the first part, the stability of the system \eqref{eq:event_based virtual_centre_dynamic} will be analyzed. In the second part, we will show Zeno behavior is excluded.
\textit{ 1) Stability analysis:} By using Lem.\ref{lem:gradient_function} and Eq.\eqref{eq:gradient_fucntion}, the time derivative of the performance function \eqref{eq:performance_function_3} along the trajectory of system \eqref{eq:event_based virtual_centre_dynamic}, is
\begin{equation} \label{eq:prove the time derivate of the Hv}
\begin{split}
\dot{H}_{V}(Z) &= \sum_{k=1}^{n}2M_{V_{k}}\left \langle z_{k}(t)-C_{V_{k}}(t), \dot{z}_{k}(t) \right \rangle \notag \\
& =- \sum_{k=1}^{n}2M_{V_{k}}\left ( \gamma g_{k}^{2}({z}_{k}(t)) + \frac{e_{k}(t)}{\omega_0}g_{k}({z}_{k}(t))  \right )  \notag 
\end{split}
\end{equation}
Because $\gamma$, $\omega_0$ and $M_{V_{k}}$ are strictly positive, if $\left | e_{k}(t) \right | \leqslant \sigma \gamma \omega_0 \left | g_{k}(z_{k}(t)) \right |$,$ 0 <\sigma <1 $, then $\dot{H}_{V}(Z)  \leq 0 $ for all $t \geq 0 $. Because $\mu_{k}(t) > 0$, it is evident that $ \left | e_{k}(t) \right | \leqslant \sigma \gamma \omega_0 \left | g_{k}(z_{k}(t)) \right | + \mu_{k}(t) $. Each robot updates its input when $f_{k} = 0$, then we have 
\begin{equation}  \label{eq: upper bounded of Hv}
\begin{split}
 \dot{H}_{V}(Z) & \leq - \sum_{k=1}^{n}2\gamma M_{V_{k}}g_{k}^{2}(z_{k}(t)) \notag \\
&~~~ +\sum_{k=1}^{n}\frac{2}{\omega_0} M_{V_{k}}\left | e_{k}(t) \right |\left | g_{k}(z_{k}(t)) \right | \notag \\
& \leq \sum_{k=1}^{n}(\sigma -1)2\gamma M_{V_{k}}g_{k}^{2}(z_{k}(t)) \notag \\ 
&~~~ + \sum_{k=1}^{n}2\gamma M_{V_{k}} e^{-\alpha _{k}t}O\left | \cos \psi \right |
\end{split}
\end{equation}
where $\psi $ is the angle between the vectors $ z_k - C_{V_{k}} $ and $ v_ke^{i\theta_k} $, and $O = \left \| z_k - C_{V_{k}} \right \|\left \| v_ke^{i\theta_k} \right \| $. 

Note that we can rewrite the performance function \eqref{eq:performance_function_3} as follows (see \cite{cortes2004coverage} for more details):
\begin{equation}
H_{V}(Z) = \sum_{k=1}^{n}J_{V_{k}, C_{V_{k}}} + \sum_{k=1}^{n}M_{V_{k}}\|z_k - C_{V_{k}} \|^2
\end{equation}
where $J_{V_{k}, C_{V_{k}}} \in\mathbb{R}^{+} $ is the polar moment of inertia of the Voronoi cell $V_{k}$ about its centroid $C_{V_{k}}$. Define 
\begin{equation}
H_{V, 1}(Z) = \sum_{k=1}^{n}J_{V_{k}, C_{V_{k}}}, H_{V, 2}(Z)= \sum_{k=1}^{n}M_{V_{k}}\|z_k - C_{V_{k}} \|^2
\end{equation}
Then we have
\begin{equation} 
\begin{split}
&\dot{H}_{V}(Z) \\ 
& = \dot{H}_{V, 2}(Z)   \\
& \leq  -\sum_{k=1}^{n}(1 - \sigma)2\gamma M_{V_{k}}\left \| z_k - C_{V_{k}} \right \|^{2}\left \| v_ke^{i\theta_k} \right \|^{2}\cos ^{2}\psi   \\
& ~~~+ \sum_{k=1}^{n}2\gamma e^{-\alpha _{k}t} M_{V_{k}} \left \| z_k - C_{V_{k}} \right \|\left \| v_ke^{i\theta_k} \right \|\left | \cos \psi \right |
\label{inequality: separat Hv1, Hv2, g }
\end{split}
\end{equation}
Define $ \varrho $ as the set of $Z, \theta $ for which $ \dot{H}_{V}(Z) = 0 $. We consider the following two cases: 1) $  \cos \psi = 0, z_k \neq C_{V_{k}}  $; 2)  $  \cos \psi \neq 0 $. \\

\noindent\textbf{Case 1:}  $  \cos \psi = 0, z_k \neq C_{V_{k}} $. Then we have 
\begin{equation}
\dot{H}_{V}(Z) \leq 0
\end{equation}
 $  \cos \psi = 0 $ means $ z_k - C_{V_{k}} \perp  v_ke^{i\theta_k} $. Then from Eq.\eqref{eq:continuous control law} it is concluded that $u_k = \omega_0 $, which implies $ \dot z_k = 0 ~\forall k \in \left \{ 1,..., n \right \}$. The vector $  v_ke^{i\theta_k} $ changes its direction with angular frequency $ \omega_0 $. Therefore $ v_ke^{i\theta_k} $ is non-constant but $ z_k - C_{V_{k}} $ is constant. 
 It is to verify that the set $\left \{ (Z, \theta ) : z_{k} - C_{V_{k}}\perp v_ke^{i\theta_k},z_{k} - C_{V_{k}}\neq 0, \forall k \in \left \{ 1,...,n \right \} \right \}$ is not the equilibrium of the system. \\
 
\noindent\textbf{Case 2:} $  \cos \psi \neq 0 $.\\
Let
\begin{eqnarray}
h_k(t) & =& M_{V_{k}}\|z_k - C_{V_{k}} \|^2  \notag\\
\kappa _k (t) &=& (1 - \sigma)2\gamma \left \| v_ke^{i\theta_k} \right \|^{2}\cos ^{2}\psi \notag\\
\beta _k(t) &=& 2\gamma e^{-\alpha _{k}t} M_{V_{k}} \left \| z_k - C_{V_{k}} \right \|\left \| v_ke^{i\theta_k} \right \|\left | \cos \psi \right | \notag
\end{eqnarray}

Then we can rewrite the Eq.\eqref{inequality: separat Hv1, Hv2, g } as follows:
\begin{equation}
\sum_{k=1}^{n} \dot h_k(x) \leq  \sum_{k=1}^{n} (-\kappa _k(t) h_k(x) + \beta _k(t)  )
\end{equation}

\begin{equation}
\int_{0}^{\infty }\kappa _k (t)= \int_{0}^{\infty }(1 - \sigma)2\gamma \left \| v_ke^{i\theta_k} \right \|^{2}\cos ^{2}\psi  = \infty 
\end{equation}
\begin{equation}
\underset{t \rightarrow \infty }{lim}\frac{\beta _k(t)}{\kappa _k (t)} = 0 \\
\end{equation}
By applying Lem.\ref{lem:stability analysis}, we can conclude that 
\begin{equation*}
\underset{t \rightarrow \infty }{lim} h_k(t) = \underset{t \rightarrow \infty }{lim} M_{V_{k}}\|z_k - C_{V_{k}} \|^2 = 0
\end{equation*}
Since $ M_{V_{k}} $ is the mass of the Voronoi cell $k$, which is positive, then we have 
\begin{equation*}
\underset{t \rightarrow \infty }{lim} z_{k}(t) = C_{V_{k}}(t)
\end{equation*}

\textit{ 2) Absence of Zeno behavior:}

\begin{definition}
A solution is said to have a nonvanishing dwell time if there exists $\tau > 0$	such that
\begin{equation}
\inf_{l}(t_{l+1}^{k}- t_{l}^{k}) \geq \tau  
\end{equation}
\end{definition}

In other words, the existence of a lower bound $\tau $ for inter-execution time intervals is to be proved. Thus it can guarantee the system does not exhibit Zeno behavior. For any $l \geq 0$, and any $ k \in \left \{ 1,..., n \right \}$, consider the time interval $ t \in [t_{l}^{k}, t_{l+1}^{k}) $. From the definition of $ e_{k}(t) $ in Eq.\eqref{eq: actuation mismatch} and the fact that in the time interval $ u_{k}(t_{l}^{k})$ is a constant, we observe the time derivative of $\left | e_{k}(t)  \right | $ satisfies 
\begin{equation} \label{inequality dot e_k(t) }
\frac{\mathrm{d} }{\mathrm{d} t}\left | e_{k}(t)  \right | \leq \left | \dot{u}_{k} \right |
\end{equation}
From Eq.\eqref{eq:continuous control law}, we have 
\begin{equation*}
\left | \dot{u}_{k} \right | = \gamma w_{0}\left | \left \langle \dot{z_{k}}-\dot{C}_{V_{k}}, v_{k}e^{i\theta _{k}} \right \rangle + \left \langle z_{k}-C_{V_{k}}, iv_{k}u_{k}e^{i\theta _{k}} \right \rangle \right |  
\end{equation*}
then we have
\begin{equation} \label{inequality uk}
\begin{split}
\left | \dot{u}_{k} \right | &\leq  \gamma w_{0} \left \| \dot{z_{k}}-\dot{C}_{V_{k}} \right \|\left \| v_{k}e^{i\theta _{k}} \right \|  \notag \\
&~~~ + \gamma w_{0}\left \| z_{k}-C_{V_{k}} \right \|\left \|  v_{k}e^{i\theta _{k}} \right \|\left | u_{k} \right |   
\end{split}
\end{equation}
Define the following auxiliary variables:\\
\begin{equation*}
A(t) = \left \| \dot{z_{k}}-\dot{C}_{V_{k}} \right \|\left \| v_{k}e^{i\theta _{k}} \right \|
\end{equation*}

\begin{equation*}
B(t) = \left \| z_{k}-C_{V_{k}} \right \|\left \|  v_{k}e^{i\theta _{k}} \right \|\left | u_{k} \right |
\end{equation*}

The system dynamics are assumed to satisfy the following properties:
\begin{itemize}
\item P(1) There exists a scalar constant $d > 0 $ such that $ \left \| z_{k}-C_{V_{k}} \right \| < d $.\\
\item P(2) There exists positive constants $v_{k,nom}$ for each robot such that $v_{k} \leq v_{k,nom}$.\\
\end{itemize}

From Eq.\eqref{eq:continuous control law}, we obtain 
\begin{equation}
\left | u_{k} \right | \leq w_{0} + \gamma w_{0}\left \| z_{k}-C_{V_{k}} \right \|\left \|  v_{k}e^{i\theta _{k}}\right \|
\end{equation}
Then from the assumed properties P(1) and P(2), we have that $\left | u_{k} \right |$ is upper bounded by a positive value. It is straightforward to see that the term $ B(t) $ is bounded.\\

The time derivative of the centroid of the Voronoi cell is given by

\begin{equation} \label{eq: dot_C_Vk }
\dot{C}_{V_{k}}= \frac{\partial C_{V_{k}}}{\partial z_{k}}\dot{z_{k}} + \sum_{j \in N_{k}}^{}\frac{\partial C_{V_{k}}}{\partial z_{j}}\dot{z_{j}}
\end{equation}

\begin{equation}
\dot{z_{k}}-\dot{C}_{V_{k}} = \dot{z_{k}} - \frac{\partial C_{V_{k}}}{\partial z_{k}}\dot{z_{k}} - \sum_{j \in N_{k}}^{}\frac{\partial C_{V_{k}}}{\partial z_{j}}\dot{z_{j}}
\end{equation}
From Eq.\eqref{eq:event_based virtual_centre_dynamic} we have 
\begin{equation}
\left \| \dot{z}_k(t) \right \| \leq \left \| \gamma v_ke^{i\theta_k(t)} \right \|O+ \left \| \frac{v_k}{\omega_0}e^{i\theta_k(t)} \right \|\left | e_{k}(t) \right |
\end{equation}
According to the trigger condition we have $ \left | e_{k}(t) \right | \leq \left | \gamma \omega_0 g_{k}(z_{k}(t)) \right | + \mu_{k}(t) $. Together with the assumed properties we have that $\left \| \dot{z}_k(t) \right \|$ is upper bounded. Analysis similar to $\left \| \dot{z}_j(t) \right \|$. \\

We note that the work \cite{lee2015multirobot} provided the analytic solutions of the partial derivative $\frac{\partial C_{V_{k}}}{\partial z_{k}}$ and $ \frac{\partial C_{V_{k}}}{\partial z_{j}} $. We refer the readers to \cite{lee2015multirobot} for more details. 


In this paper, we consider the convex polygon in the planar case and that the area the robots need to cover is bounded. So the area of each Voronoi cell is bounded. It is evident that the integration area, in other words, the boundary of the Voronoi cell is bounded. Thus, $ \left \| \frac{\partial C_{V_{k}}}{\partial z_{k}} \right \| $ and $ \left \| \frac{\partial C_{V_{k}}}{\partial z_{j}} \right \| $ are bounded. Then we have
\begin{equation}
\left \| \dot{z_{k}}-\dot{C}_{V_{k}} \right \| \leq \left \| 1- \frac{\partial C_{V_{k}}}{\partial z_{k}} \right \|\left \| \dot{z_{k}} \right \|+ \sum_{j \in N_{k}}^{} \left \| \frac{\partial C_{V_{k}}}{\partial z_{j}} \right \|\left \| \dot{z_{j}} \right \|
\end{equation}

The term $\left \| \dot{z_{k}}-\dot{C}_{V_{k}} \right \| $ is bounded. Then we obtain that $B(t)$ is bounded. From the above conclusions, it is straightforward to conclude that $\left | \dot{u}_{k} \right |$ is bounded.
Define a positive constant $B_{e}$, which represents the upper bound of $ \left | \dot{u}_{k} \right | $. Then, we obtain
\begin{equation*}
\frac{\mathrm{d} }{\mathrm{d} t}\left | e_{k}(t)  \right | \leq B_{e}
\end{equation*}

It follows that 
\begin{equation} \label{inequality: e_k integral}
\left | e_{k}(t)  \right | \leq \int_{t_{l}^{k}}^{t}B_{e} dt = (t-t_{l}^{k}) B_{e}
\end{equation}
for $t \in [ t_{l}^{k}, t_{l+1}^{k} )$ and for any $l$. Recall the trigger function Eq.\eqref{eq: trigger function}, and the control input mismatch $e_{k}(t)$ is reset to zero at $t_{l}^{k} $. It follows that the next event time $t_{l+1}^{k}$ is determined by the changing rates of $e_{k}(t)$ and the threshold $ \sigma \gamma \omega_0 \left | g_{k}(z_{k}(t)) \right | + \mu_{k}(t)$. It is evident that 

\begin{equation}
\left | e_{k}(t)  \right | = \sigma \gamma \omega_0 \left | g_{k}(z_{k}(t)) \right | + \mu_{k}(t)
\end{equation}
holds at the next trigger time $t_{l+1}^{k}$. In the stability analysis part we conclude that  $ \lim_{t\rightarrow \infty }z_{k}(t) = C_{V_{k}}(t) $. However, it is important to point out that in the evolution of the system, $z_{k}(t) - C_{V_{k}}(t) = 0 $ may also hold at $t_{l+1}^{k}$ while the optimal coverage is not achieved as $\dot{z_{k}}$ can be nonzero at $t_{l+1}^{k}$. We consider the triggering at $ t_{l+1}^{k} $ in the following two cases:

\begin{itemize}
	\item Case 1: If $\left \| z_{k}(t_{l+1}^{k}) - C_{V_{k}}(t_{l+1}^{k}) \right \|  \neq 0 $, the equality $\left | e_{k}(t_{l+1}^{k})  \right | = \sigma \gamma \omega_0 \left | g_{k}(z_{k}(t_{l+1}^{k})) \right | + \mu_{k}(t_{l+1}^{k})$ is satisfied.
	\item Case 2: If $\left \| z_{k}(t_{l+1}^{k}) - C_{V_{k}}(t_{l+1}^{k}) \right \|  = 0 $, the equality $\left | e_{k}(t_{l+1}^{k})  \right | = \mu_{k}(t_{l+1}^{k})$ is satisfied.
\end{itemize}
It is evident to see that $\left \| z_{k}(t_{l+1}^{k}) - C_{V_{k}}(t_{l+1}^{k}) \right \| > 0$ for any $\left \| z_{k}(t_{l+1}^{k}) - C_{V_{k}}(t_{l+1}^{k}) \right \|  \neq 0 $. Compare the above two cases, we can conclude that it takes longer for the quantity $\left | e_{k}(t_{l+1}^{k})  \right |$ to increase to be equal to the quantity $\left | \gamma \omega_0 g_{k}(z_{k}(t_{l+1}^{k})) \right | + \mu_{k}(t_{l+1}^{k})$ than to increase to be equal to the quantity $\mu_{k}(t_{l+1}^{k})$. In other words, $\tau_{Case 1} > \tau_{Case 2} $. \\

According to Eq.\eqref{inequality: e_k integral},
\begin{equation}
B_{e}\tau_{Case 2} \geq \left | e_{k}(t)  \right | = \mu_{k}(t) = \gamma \omega_0 e^{-\alpha _{k}( t_{l}^{k} + \tau_{Case 2} )}
\end{equation}
where $ \gamma \omega_0 e^{-\alpha _{k}( t_{l}^{k} + \tau_{Case 2} )} > 0$, $B_{e}$ is a positive constant. We can conclude that the inter-event time interval $\tau_{Case 2}$ is strictly positive. Since there is a positive lower bound on the inter-event time intervals, there are no accumulation points in the event sequences, so Zeno behavior is excluded for all robots.

\end{proof}

\bibliographystyle{ieeetr}
\bibliography{coverage}

\begin{thebibliography}{10}

\bibitem{cortes2004coverage}
J.~Cortes, S.~Martinez, T.~Karatas, and F.~Bullo, ``Coverage control for mobile
  sensing networks,'' {\em IEEE Transactions on Robotics and Automation},
  vol.~20, no.~2, pp.~243--255, 2004.

\bibitem{jiang2015higher}
B.~Jiang, Z.~Sun, and B.~D.~O. Anderson, ``Higher order voronoi based mobile
  coverage control,'' in {\em 2015 American Control Conference (ACC)},
  pp.~1457--1462, IEEE, 2015.

\bibitem{seyboth2014collective}
G.~S. Seyboth, J.~Wu, J.~Qin, C.~Yu, and F.~Allg{\"o}wer, ``Collective circular
  motion of unicycle type vehicles with nonidentical constant velocities,''
  {\em IEEE Transactions on Control of Network Systems}, vol.~1, no.~2,
  pp.~167--176, 2014.

\bibitem{sun2015collective}
Z.~Sun, G.~S. Seyboth, and B.~D.~O. Anderson, ``Collective control of multiple
  unicycle agents with non-identical constant speeds: Tracking control and
  performance limitation,'' in {\em 2015 IEEE Conference on Control
  Applications (CCA)}, pp.~1361--1366, IEEE, 2015.

\bibitem{liu2017coverage}
Q.~Liu, M.~Ye, Z.~Sun, J.~Qin, and C.~Yu, ``Coverage control of unicycle agents
  under constant speed constraints,'' {\em IFAC-PapersOnLine}, vol.~50, no.~1,
  pp.~2471--2476, 2017.

\bibitem{schwager2006distributed}
M.~Schwager, J.~McLurkin, and D.~Rus, ``Distributed coverage control with
  sensory feedback for networked robots.,'' in {\em Robotics: Science and
  Systems}, pp.~49--56, 2006.

\bibitem{gusrialdi2008voronoi}
A.~Gusrialdi, S.~Hirche, T.~Hatanaka, and M.~Fujita, ``Voronoi based coverage
  control with anisotropic sensors,'' in {\em 2008 American Control
  Conference}, pp.~736--741, IEEE, 2008.

\bibitem{breitenmoser2010voronoi}
A.~Breitenmoser, M.~Schwager, J.-C. Metzger, R.~Siegwart, and D.~Rus, ``Voronoi
  coverage of non-convex environments with a group of networked robots,'' in
  {\em 2010 IEEE International Conference on Robotics and Automation},
  pp.~4982--4989, IEEE, 2010.

\bibitem{gusrialdi2014exploiting}
A.~Gusrialdi and C.~Yu, ``Exploiting the use of information to improve coverage
  performance of robotic sensor networks,'' {\em IET Control Theory \&
  Applications}, vol.~8, no.~13, pp.~1270--1283, 2014.

\bibitem{boldrer2019coverage}
M.~Boldrer, D.~Fontanelli, and L.~Palopoli, ``Coverage control and distributed
  consensus-based estimation for mobile sensing networks in complex
  environments,'' in {\em 2019 IEEE 58th Conference on Decision and Control
  (CDC)}, pp.~7838--7843, IEEE, 2019.

\bibitem{mavrommati2017real}
A.~Mavrommati, E.~Tzorakoleftherakis, I.~Abraham, and T.~D. Murphey,
  ``Real-time area coverage and target localization using receding-horizon
  ergodic exploration,'' {\em IEEE Transactions on Robotics}, vol.~34, no.~1,
  pp.~62--80, 2017.

\bibitem{dimarogonas2011distributed}
D.~V. Dimarogonas, E.~Frazzoli, and K.~H. Johansson, ``Distributed
  event-triggered control for multi-agent systems,'' {\em IEEE Transactions on
  Automatic Control}, vol.~57, no.~5, pp.~1291--1297, 2011.

\bibitem{wei2018edge}
B.~Wei, F.~Xiao, and M.-Z. Dai, ``Edge event-triggered control for multi-agent
  systems under directed communication topologies,'' {\em International Journal
  of Control}, vol.~91, no.~4, pp.~887--896, 2018.

\bibitem{sun2016new}
Z.~Sun, N.~Huang, B.~D.~O. Anderson, and Z.~Duan, ``A new distributed zeno-free
  event-triggered algorithm for multi-agent consensus,'' in {\em 2016 IEEE 55th
  Conference on Decision and Control (CDC)}, pp.~3444--3449, IEEE, 2016.

\bibitem{liu2017event}
Q.~Liu, M.~Ye, J.~Qin, and C.~Yu, ``Event-triggered algorithms for
  leader--follower consensus of networked euler--lagrange agents,'' {\em IEEE
  Transactions on Systems, Man, and Cybernetics: Systems}, vol.~49, no.~7,
  pp.~1435--1447, 2017.

\bibitem{hayashi2015distributed}
N.~Hayashi, Y.~Muranishi, and S.~Takai, ``Distributed event-triggered control
  for voronoi coverage,'' in {\em 2015 International Conference on Event-based
  Control, Communication, and Signal Processing (EBCCSP)}, pp.~1--4, IEEE,
  2015.

\bibitem{nowzari2012self}
C.~Nowzari and J.~Cort{\'e}s, ``Self-triggered coordination of robotic networks
  for optimal deployment,'' {\em Automatica}, vol.~48, no.~6, pp.~1077--1087,
  2012.

\bibitem{tabatabai2019self}
D.~Tabatabai, M.~Ajina, and C.~Nowzari, ``Self-triggered distributed k-order
  coverage control,'' {\em arXiv preprint arXiv:1903.04726}, 2019.

\bibitem{ajina2020asynchronous}
M.~Ajina, D.~Tabatabai, and C.~Nowzari, ``Asynchronous distributed
  event-triggered coordination for multiagent coverage control,'' {\em IEEE
  Transactions on Cybernetics}, 2020.

\bibitem{liu2020network}
Y.~Liu, Y.~Lou, B.~D.~O. Anderson, and G.~Shi, ``Network flows that solve least
  squares for linear equations,'' {\em Automatica}, vol.~120, p.~109108, 2020.

\bibitem{hadjicostis2003distributed}
C.~N. Hadjicostis and M.~Cao, ``Distributed algorithms for voronoi diagrams and
  applications in ad-hoc networks,'' tech. rep., Coordinated Science
  Laboratory, University of Illinois at Urbana-Champaign, 2003.

\bibitem{lee2015multirobot}
S.~G. Lee, Y.~Diaz-Mercado, and M.~Egerstedt, ``Multirobot control using
  time-varying density functions,'' {\em IEEE Transactions on Robotics},
  vol.~31, no.~2, pp.~489--493, 2015.

\end{thebibliography}

\end{document}